\documentclass{article}
\usepackage[final]{score_workshop}

\usepackage{hyperref}  
\hypersetup{
	colorlinks=true,
	linkcolor=black,
	citecolor=black,
	filecolor=black,
	urlcolor=black,
} 
\PassOptionsToPackage{round}{natbib}

\usepackage[utf8]{inputenc} 
\usepackage[T1]{fontenc}    
\usepackage{url}            
\usepackage{booktabs}       
\usepackage{amsfonts}       
\usepackage{nicefrac}       
\usepackage{microtype}      
\usepackage{xcolor}         

\usepackage{bm,mathtools,dsfont,mathrsfs}
\usepackage{IEEEtrantools}
\usepackage[shortlabels, inline]{enumitem}
\usepackage[title]{appendix}
\usepackage{graphicx}
\usepackage{tikz}
\usepackage{amsthm}

\newtheorem{proposition}{Proposition}


\DeclareMathOperator{\R}{\mathbb{R}}

\DeclareMathOperator{\D}{\mathbf{D}}
\DeclareMathOperator{\simplex}{\mathcal{S}}
\DeclareMathOperator{\Hyv}{\mathcal{H}}

\DeclareMathOperator*{\argmin}{arg\,min}

\newcommand{\bigoh}[1]{\mathcal{O}(#1)}

\title{Locking and Quacking: Stacking Bayesian model predictions by log-pooling and superposition}
\author{
  Yuling Yao  \\
 Flatiron Institute
  \And
  Luiz Max Carvalho \\
   { \hspace{1.9cm} Getulio Vargas}
  \And
    Diego Mesquita \\
    {\hspace{-1.9cm} Foundation}
    \And
   Yann McLatchie  \\
   Aalto Univerity
}
\date{}

\begin{document}
\maketitle
\thispagestyle{empty}
\begin{abstract}
    Combining predictions from different models is a central problem in Bayesian inference and machine learning more broadly.
    Currently, these predictive distributions are almost exclusively combined using linear mixtures such as Bayesian model averaging, Bayesian stacking, and mixture of experts.
    Such linear mixtures impose idiosyncrasies that might be undesirable for some applications, such as multi-modality.
    While there exist alternative strategies (e.g. geometric bridge or superposition), optimising their parameters usually involves computing an intractable normalising constant repeatedly.
    We present two novel Bayesian model combination tools. These are generalisations of model stacking, but combine posterior densities 
    by log-linear pooling (\emph{locking}) and quantum superposition (\emph{quacking}). 
    To optimise model weights while avoiding the burden of normalising constants, we investigate the  Hyv\"arinen score of the combined posterior predictions.
    We demonstrate locking with an illustrative example, and discuss its practical application with importance sampling.
\end{abstract}
\section{Introduction}
\label{sec:introduction}

A general challenge in statistics is prediction in the presence of multiple candidate models or learning algorithms: we are interested in some outcome $y$ on a measurable space $\mathcal{Y}\subseteq \R^d$; we fit different models to the data, or the same model on different parts of the data set leading to a set of predictive distributions, $\{\pi_1 (y), \ldots, \pi_K (y) \}$, where each $\pi_k (y)$ is  a (conditional\footnote{The dependence on covariates $\theta$ is suppressed for brevity.}) probabilistic density such that $\int_{\mathcal{Y}} \pi_{k} (y) \,\mathrm{d}y =1.$
There are three subjective decisions to make in such a workflow:
\begin{enumerate*}[label=(\arabic*), ref=\arabic*, nosep]
\item the choice of individual models to combine;
\item the  prior weight assigned to each model;
\item the form in which individual sampling models are combined in the predictive sampling distribution.
\end{enumerate*}
We are primarily interested in this third and final decision.

The combination operation binds individual sampling distributions into a larger encompassing sampling model.
A combination operator, denoted $h$ and parametrised by some model weights $\boldsymbol{w}$, maps a sequence of probability densities into a single probability  density:
\begin{equation}
    h (\pi_1(y), \dots, \pi_K(y) \mid \boldsymbol{w} ) = \pi_*(y), 
\end{equation}
subject to $\pi_*(y) \geq 0,$ for all $y$, and $\int_{\mathcal{Y}} \pi_*(y) \,\mathrm{d}y=1$, where we integrate with respect to the Lebesgue measure.
For example, a (linear) mixture can be represented by 
\begin{equation}
    h(\pi_1(y), \dots, \pi_K(y) \mid \boldsymbol{w} ) = \sum_{k=1}^K w_k \pi_k(y), 
\end{equation}
subject to $\Sigma_{k} w_k=1$.

In Bayesian statistics, the linear mixture is the \emph{de facto} combination operator to combine predictive distributions, and is found in  Bayesian model averaging \citep{raftery1997bayesian}, stacking \citep{yao2018using}, hierarchical stacking \citep{yao2021bayesian}, hypothesis testing \citep{kamary2018bayesian}, and mixture-of-experts \citep{jacobs1991adaptive,jordan1994hierarchical, yuksel2012twenty}.
Despite its mathematical convenience, the linear mixture has a few limitations:
\begin{enumerate*}[label=(\arabic*), ref=\arabic*, nosep]
\item linear combinations mean that one is restricted to a network of depth one when combining individual sampling models;
\item it only examines likelihoods through their evaluations at realised observations;
\item its linear nature typically results in a multimodal posterior predictive distribution, which comes with unnatural interpretation and poor interval coverage.
\end{enumerate*}
 
In this paper, we primarily consider combining Bayesian predictive distributions by geometric bridge (log-linear pooling, or \textit{locking}),
    \begin{equation}\label{eq_geoBri}
        h (\pi_1(\cdot), \dots, \pi_K(\cdot) \mid  \boldsymbol{w} ) \coloneqq  \frac{\prod_{k}  \pi_k^{w_k}(\cdot)}{\int_{\mathcal{Y}} \prod_{k}  \pi_k^{w_k} (y) \,d y},  
    \end{equation}
    where the weights lie in the $K$-dimensional simplex, $\boldsymbol{w} \in \simplex^K$.

Compared to the linear mixture, these new operators have appealing features: when individual sampling models are log-concave, so is their geometric (log-linear) bridge, hence preserving unimodality.
Moreover, in quantum superposition, when the phases $\alpha$ are uniformly distributed, we get back a linear mixture of densities.
Even when there is only one single model, depending on the phase, the superposition and geometric bridge can make the combined distributions spikier, or flatter -- approximately a power transformation, thereby automatically calibrating the prediction confidence.
In this sense, our locking approach automatically calibrates the prediction by self-interference.
Finally, unlike the linear mixture, the superposition and geometric bridge can create a middle mode, leading to more flexible predictions.

The remaining question is then how to optimise the weights $w_k$ such that the combined predictions best fit the data.
This is challenging because of the intractable normalising constant, and existing log-linear pooling techniques rely on some non-testable normal approximation \citep{poole2000inference,huang2005sampling, rufo2012log, Carvalho2023}.
In the next section, we provide a practical solution that incorporates the  Hyv\"arinen score \citep{hyvarinen2005estimation} and Bayesian posterior predictions. 
\section{Operator-oriented stacking}
\label{sec:operators}
In methods like stacking and mixture of experts, we need a scoring rule to evaluate the combined prediction \citep{gneiting2007strictly}.
The logarithmic scoring rule is \textit{de facto} the only continuous proper local scoring rule.
However, the log score does not easily apply to log-linear pooling and superposition: aside from trivial cases, the combined predictive densities contain an unknown normalisation constant in the denominator.  

To bypass computing this normalising constant, we use the Hyv\"arinen score \citep{hyvarinen2005estimation} to evaluate the unnormalised combined predictive density.
The latter has found application in producing posterior distributions with scoring rules \citep{Giummol__2018}, model selection under improper priors \citep{dawid_bayesian_2015, Shao2019}, and model selection of improper models \citep{jewson_general_2021}.
In general, given an unnormalised density $p$, how well it fits the observed data $y$ is quantified by 
$$
\mathcal{H}(y; p) = 2\Delta_y \log p(y) + \lVert\nabla_y \log p(y)\rVert^2.
 $$
The Hyv\"arinen score can then be interpreted as the $L_2$ norm of the difference between the score of the prediction and the true data generating process.

\subsection{Importance weighted estimate of the Hyv\"arinen score}
\label{sec:derivative_estimation}
Within Bayesian inference, posterior predictions are themselves a mixture of conditional sampling distributions.
For instance, the $k^\text{th}$ model's posterior parameter distribution given observed data $\D$ is denoted $p_k(\theta \mid \D)$, and its predictive density on future data  $\tilde y$ is $    \pi_k(\tilde y) = \int_{\boldsymbol{\Theta}} f_k(\tilde y \mid \theta) p_k(\theta \mid \D)\,\mathrm{d}\theta$, where we drop the dependence of $\pi_k$ on $\D$ for notational convenience.
To compute the Hyv\"arinen score of this posterior predictive distribution, we need the pointwise score functions $\Delta_y \log \pi_k(y)$ and $\nabla_y \log \pi_k(y)$.

We will typically use Markov chain Monte Carlo (MCMC) methods for individual model inference, such that we have $S$ simulation draws $\{\theta_{k}^{(s)}\}_{s = 1}^{S}$ from the $k^\text{th}$ model posterior $p_k(\theta \mid \D)$.
We compute both score functions by Monte Carlo sum, and provide a plug-in estimate of the score function by importance sampling.

In our case, we would like to sample from the first derivative of the posterior predictive distribution (our target), while only being able to actually sample from the log score of the model, $f_k(y\mid\theta_k)$, and the posterior distributions of the model parameters, $p_k(\theta \mid \D)$ (our proposal).
We thus consider an importance weighted estimator of the target 
\begin{IEEEeqnarray}{rl}
    \nabla_y \log \pi_k(y) \approx g_{k}(y) \coloneqq \frac{\sum_{s=1}^{S} \nabla_y f_k(y \mid \theta_{k}^{(s)}) }{\sum_{s=1}^{S} f(y \mid \theta_{k}^{(s)})}, \label{eq_score_est}
\end{IEEEeqnarray}
where $\theta_{k} = \left\{\theta_{k}^{(1)}, \ldots, \theta_{k}^{(S)} \right\}$ are draws from $p_k(\cdot \mid \D)$.
As such we find that the first derivative can be approximated as the Monte Carlo expectation taken with respect to
\begin{equation}
    h_{1k}(\theta^{(s)}) = \frac{\nabla_yf_k(y\mid\theta^{(s)})}{f_k(y\mid\theta^{(s)})^2}, \nonumber
\end{equation}
and with individual sample weights computed as
\begin{equation}
    \omega_k(\theta^{(s)}) = f_k(y\mid\theta^{(s)}) . \label{hyva-weight}
\end{equation}
Repeating similar steps for the second derivative, we achieve the approximation,
\begin{IEEEeqnarray}{rl}
     \Delta_y \log \pi_k(y) \;&= \frac{\pi_k(y)\pi_k^{\prime\prime}(y) - \pi_k^{\prime}(y)^2}{\pi_k(y)^2},\nonumber \\
      &\approx  \frac{\sum_{s=1}^{S} f_k^{\prime\prime}(y \mid \theta_{k}^{(s)}) }{\sum_{s=1}^{S} f_k(y \mid \theta_{k}^{(s)})} - g_{k}(y)^2. \nonumber
\end{IEEEeqnarray}
A complete derivation of these estimators is provided in Appendix~\ref{sec:appendix-is-hyva}, along with the function required to estimate the second derivative by \textit{via} Monte Carlo.
Note that the individual sample weights $\omega_k(\theta)$ are constant over all $h_{1k}(\theta)$.
There is no worry that the denominator and numerator are estimated using the same draws: we can view Equation~\ref{eq_score_est} as self-normalised importance sampling and thus the usual convergence theory \citep[e.g.,][]{mcbook} guarantees the consistency and asymptotic normality of our score function estimate.

One can easily compute the approximate leave-one-out cross-validated \citep[LOO-CV;][]{vehtari2017practical} Hyv\"arinen score by re-weighting the $s^{\text{th}}$ posterior sample by its log score weight, given in Equation~\ref{hyva-weight}.
\subsection{Score matched model stacking}
Our general model combination method revolves around optimising the Hyv\"arinen score of the combined posterior densities, and consists of four steps:
\begin{enumerate}[
    label={\textbf{Step \arabic*}:}, 
    ref=\arabic*, 
    itemindent=0pt, 
    leftmargin=!,
    labelindent=1em
]
\item Fit each model to the data and obtain $K$ predictive densities.
In practice, the posteriors  $p_{k}(\theta\mid\D)$ are represented by Monte Carlo draws, $\{\theta_{k}^{(s)}\}_{s=1}^{S}$, leading to the estimate $\hat{\pi}_k(\cdot) \coloneqq \frac{1}{S} \sum_{s=1}^{S}  f_k (\cdot \mid \theta_{k}^{(s)})$ of the predictive density.
\item Express the unnormalised predictive density via the combination operator.
For example, in locking we have 
\begin{equation}
    q(\cdot \mid  \boldsymbol{w}) := \prod_{k}  \pi_k^{w_k}(\cdot). \nonumber
\end{equation}
\item \label{step:3} Evaluate  $\nabla_y \log q(\cdot \mid  \boldsymbol{w})$ and $\Delta_y \log q(\cdot\mid \boldsymbol{w} )$ at all observed points, $y_i \in D$.
They come in closed form functions of $\nabla_y \pi_k (y_i \mid \theta_{k}^{(s)})$ and $\Delta_y \pi_k (y_i \mid \theta_{k}^{(s)})$.
  In locking: 
 \begin{IEEEeqnarray}{rl}
 q^\prime_{i} (\boldsymbol{w}) \;&\coloneqq  \nabla_y \log q(y_i \mid  \boldsymbol{w}), \nonumber \\
 \;&= \sum_{k=1}^K w_{k} \nabla_y \log \left( \pi_{k} (y_i)\right), \nonumber\\
 \;&\approx \sum_{k=1}^K w_{k}  \frac{\sum_{s=1}^S   \nabla_y   f_k (y_i \mid \theta_{k}^{(s)})   }{\sum_{s=1}^{S}   f_k (y_i \mid \theta_{k}^{(s)})  }, \nonumber\\
q^{\prime \prime}_i  (\boldsymbol{w}) \;&\coloneqq  \Delta_y \log q(y_i \mid  \boldsymbol{w}), \nonumber \\
\;&\approx \sum_{k=1}^K \frac{w_{k}  }{S} \sum_{s=1}^S \left(\Delta_y \log f_k (y_i \mid \theta_{k}^{(s)}) \right). \nonumber
 \end{IEEEeqnarray} 
These weights should ideally be computed using the LOO-CV Hyv\"arinen score, or independent test data.
\item Optimise the model weight vector $\boldsymbol{w}$ according to the constrained objective
    \begin{equation}
        \hat w_{\mathrm{opt}} = \argmin_{ \boldsymbol{w} \in\simplex^K} \left\{\sum_{i=1}^{n}\left({2q^{\prime \prime}_i  (\boldsymbol{w}) + \mid q^\prime_{i} (\boldsymbol{w})  \mid^2 }\right) - \log \mathrm{prior}(\boldsymbol{w}) \right\}.
        \label{eq_obj}
  \end{equation}
We use a non-informative Dirichlet prior (with concentration parameters all equal to 1.01) over the weight regularisation term $\boldsymbol{w}$. 
\end{enumerate}

\paragraph{Complexity.}

The key blessing of applying score matching to Bayesian predictions is that the Monte Carlo integral is linear in complexity, and is exchangeable with gradient operators.
Hence, all we need is to compute and store the gradient and hessian of the log likelihood (with respect to data, which is usually much lower dimension) at the sampled parameters once, that is $\nabla_y \pi_k (y_i \mid \theta_{k}^{(s)})$ and $\Delta_y \pi_k (y_i \mid \theta_{k}^{(s)})$.
In particular, the score functions have already been computed in gradient-based MCMC sampler, such as in dynamic Hamiltonian Monte Carlo \citep{hoffman2014no} and hence are nearly free.
The summation in the objective function (Equation~\ref{eq_obj}) contains $\bigoh{nKS}$ gradient evaluations in total, which can be done in parallel. 
 
\begin{figure}
    \centering
    \includegraphics[scale=0.5]{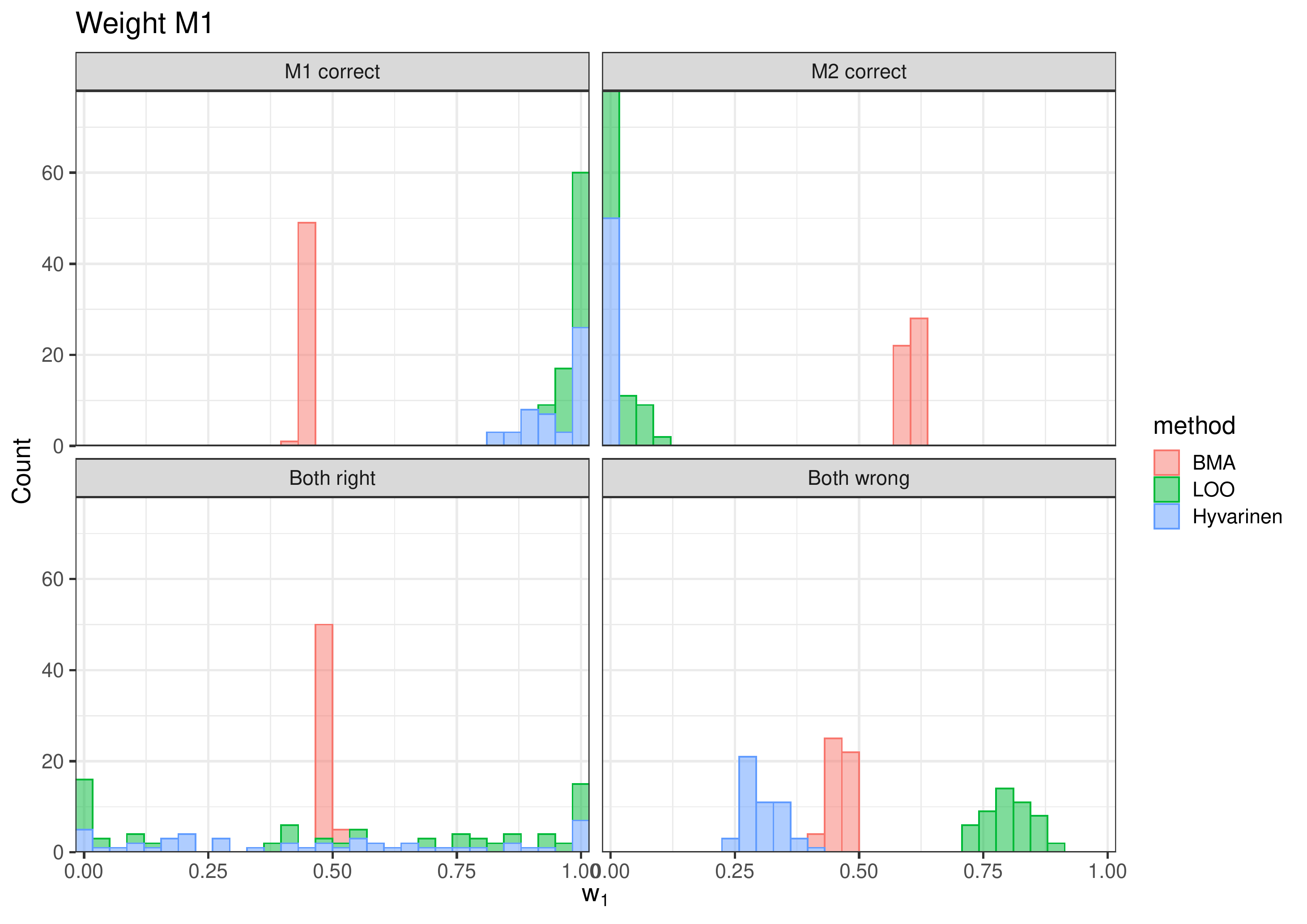}
    \caption{\small \textbf{Estimated Model 1 weights in repeated sampling under various scenarios}.
    We show the optimised value of the weight for model 1 ($w_1$) according to Bayesian model averaging (BMA), leave-one-out (LOO) stacking and  Hyv\"arinen model stacking.
    }    \label{fig:weightsNNN}
\end{figure}
\section{Motivating example with non-nested models}
\begin{figure} 
    \centering
    \includegraphics[scale=0.5]{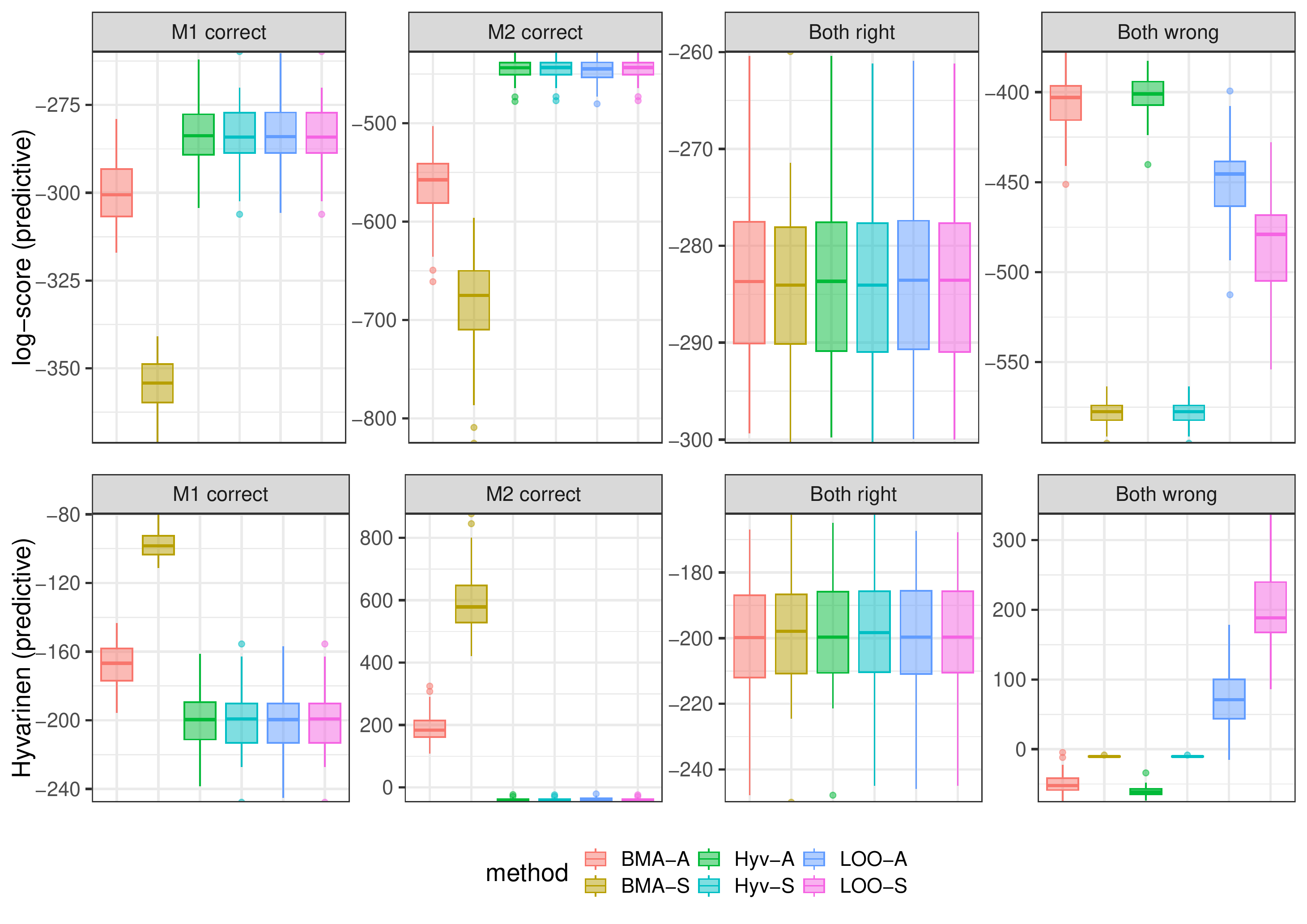}
    \caption{\small \textbf{Log predictive and Hyv\"arinen scores}.
    For each method, we show the overall predictive log score, $\sum_{j=1}^{N_{\textrm{pred}}}\log \pi_\ast(y_j)$. Our method (log-linear Hyv\"arinen model stacking, green) achieves higher log scores in all four scenarios.}
    \label{fig:logscores}
\end{figure}
To illustrate the flexibility of our new approach, we compare our proposed locking procedure to other state-of-the-art model averaging and selection tools. Consider two belief models adapted from \citet{Shao2019}:
\begin{IEEEeqnarray}{lrl}
    \mathcal{M}_1:& \quad Y_i \;&\sim \operatorname{normal}(\theta_1, 1),\nonumber \\
    & \theta_1 \;&\sim \operatorname{normal}(0, v_0); \nonumber \\
    \mathcal{M}_2:& \quad  Y_i \;&\sim \operatorname{normal}(0, \theta_2), \nonumber \\
    & \theta_2 \;&\sim \operatorname{inverse-}\chi^2(\nu_0, \tau_0). \nonumber
\end{IEEEeqnarray}
Following~\citet{Shao2019}, we take $v_0=10$, $\nu_0=0.1$ and $\tau_0=1$, and simulate $N_{\text{train}}$ data points from a true data generating process (a Gaussian distribution with mean $\mu^\star$ and variance $v_\star$) and generate $N_{\text{test}}$ independent test samples. Consider four scenarios: 
\begin{enumerate*}[label=(\arabic*), ref=\arabic*, nosep]
    \item $\mu^\star=1$ and $v^\star=1$ meaning that $\mathcal{M}_1$ is correctly specified but  $\mathcal{M}_2$ is not;
    \item $\mu^\star=0$ and $v^\star=5$ meaning that $\mathcal{M}_2$ is correctly specified but  $\mathcal{M}_2$ is not; 
    \item $\mu^\star=4$ and $v^\star=3$, a situation in which neither model is correctly specified;
    \item $\mu^\star=0$ and $v^\star=1$, in which both are correctly specified.
\end{enumerate*}
We ran $M=100$ replications of each scenario, with $N_{\text{train}}=200$ and $N_{\text{test}} = 50$.

We compare six methods in total:
\begin{enumerate*}[label=(\arabic*), ref=\arabic*, nosep]
    \item model selection using marginal likelihood;
    \item Bayesian model averaging;
    \item model selection using LOO-CV expected log-predictive density (elpd) \citep{vehtari2017practical};
    \item Bayesian stacking \citep{yao2018using};
    \item model selection using Hyv\"arinen score \citep{Shao2019};
    \item \emph{locking} (our proposed method).
\end{enumerate*}
We evaluate predictive performance of the learned combined model, where weights are computed using the training data. To make the comparison fair, we pick a metric that we do not directly optimise over: the log predictive density on test data. As shown in Figure~\ref{fig:logscores}, our new locking method is among the best-performing procedures in terms of the log score throughout all regimes.

We also demonstrate our proposed importance sampling routine from Section~\ref{sec:importance_sampling} on this case example. In Figure~\ref{fig:sampling} we visualise the posterior predictive draws from both models in all four cases, along with the posterior predictive of the locked model using importance sampling compared to the true data-generating process. Specifically, we find that we importance sampling is an efficient and accurate method of sampling from the locked posterior predictive, and further that this coincides with the true data-generating process, even the case where neither model is correctly specified.
\begin{figure} 
    \centering
    \includegraphics[width=0.8\textwidth]{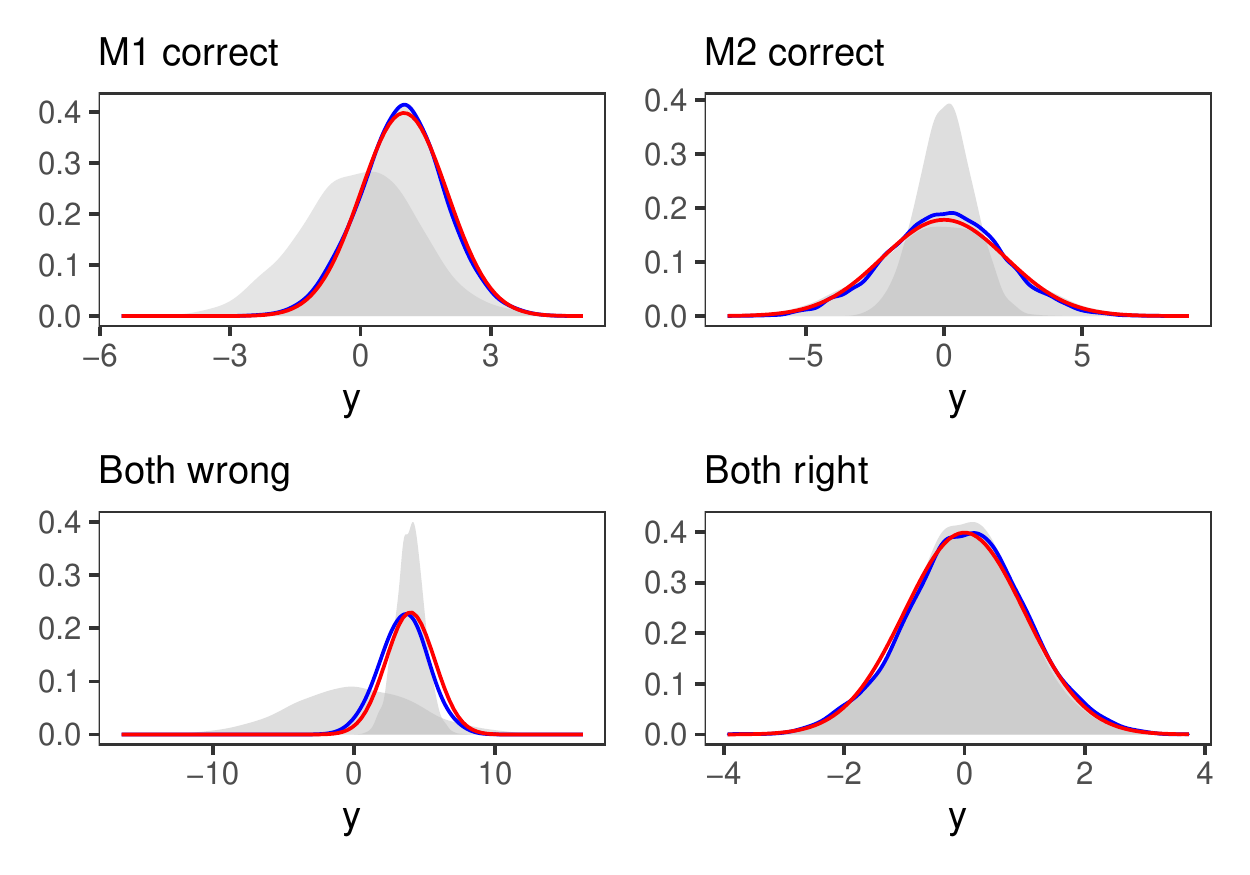}
    \caption{\small \textbf{Sampling from the locked posterior predictive}. The importance-sampled posterior predictive density of the locked model (in \textcolor{blue}{blue}) and the theoretical true data-generating process (in \textcolor{red}{red}) compared to the underlying constituent models (in \textcolor{gray}{grey}). We find that in this simple case, we can efficiently and accurately achieve the true predictive density with importance sampling -- even when neither of the models is correctly specified.}
    \label{fig:sampling}
\end{figure}

\begin{figure} 
    \centering
    \includegraphics[width=0.7\textwidth]{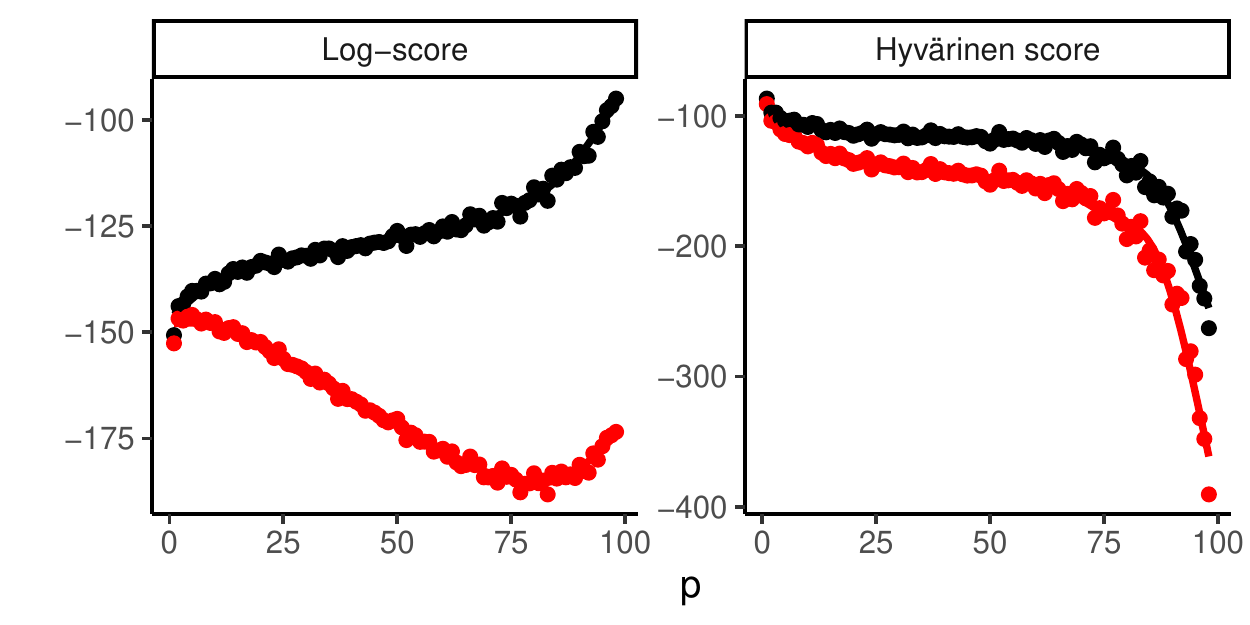}
    \caption{\small Simulated linear regression example. A comparison of mean in-sample (smoothed black line) and LOO-CV  (smoothed \textcolor{red}{red} line) Hyvärinen score and the log score over $100$ iterations for each regression dimension $p = 1,\dotsc,100$. We find that while in-sample and LOO-CV elpd scores diverge as $p$ grows, the in-sample Hyv\"arinen score remains close to its LOO value. Since locking operates on a different loss function than that which the models were trained on, the model averaging step is less likely to over-fit the data.}
    \label{fig:overfitting}
\end{figure}

\section{Discussion}
%
\subsection{Relative over-fitting of log-predictive stacking compared to score-matched stacking}
In addition to the ability of training an unnormalized model,  using the Hyv\"arinen score in the model averaging context provides extra immunity to overfitting: since the individual models are not trained the Hyv\"arinen score, it is likely that reusing the same training data to compute the Hyv\"arinen score as was used to train the individual models will not inject large amounts of bias to the inference of stacking weights $w_k$, analogously to Goodhart's Law \citep{goodhart_problems_1984}.\footnote{In a word, ``any observed statistical regularity will tend to collapse once pressure is placed upon it for control purposes''.} In contrast, a Bayesian model is guaranteed to over-fit the log score:  the in-sample log score is
$
    \sum_{i=1}^n \log \left( \frac{1}{S} \sum_{s=1}^S p(y_i \mid \theta_s)\right),
$  
while its LOO estimator is 
$ 
    \sum_{i=1}^n \log \left( \frac{S}{ \sum_{s = 1}^S p^{-1}(y_i \mid \theta_{s})} \right).
$ 
Since the harmonic mean
is always less than the arithmetic mean,  
any Bayesian model is guaranteed to have a lower leave-one-out log score point-wise. 
%

The immunity of the Hyv\"arinen score to overfitting  is empirically seen in the following simulated example: we generate $n = 100$ data points according to an underlying linear model with a fixed low signal-to-noise ratio. We then fit a linear regression model with different numbers of covariates $p$, using priors that are liable to over-fit and compute the in-sample and LOO-CV log-predictive density and Hyv\"arinen scores, shown in Figure~\ref{fig:overfitting}.\footnote{In particular, we impose independent wide Gaussian priors over the regression coefficients, and Student-$t$ priors over the intercept and variance parameter.} We find that as $p$ grows and the model over-fits the data in the log score, and the divergence between in-sample log score and LOO log score grows. The LOO Hyv\"arinen score however remains closer the in-sample Hyv\"arinen score as $p$ grows, suggesting that it does not over-fit the data as severely.

\subsection{Alternative combination operators}
\begin{figure} 
    \centering
    \includegraphics[width=\linewidth]{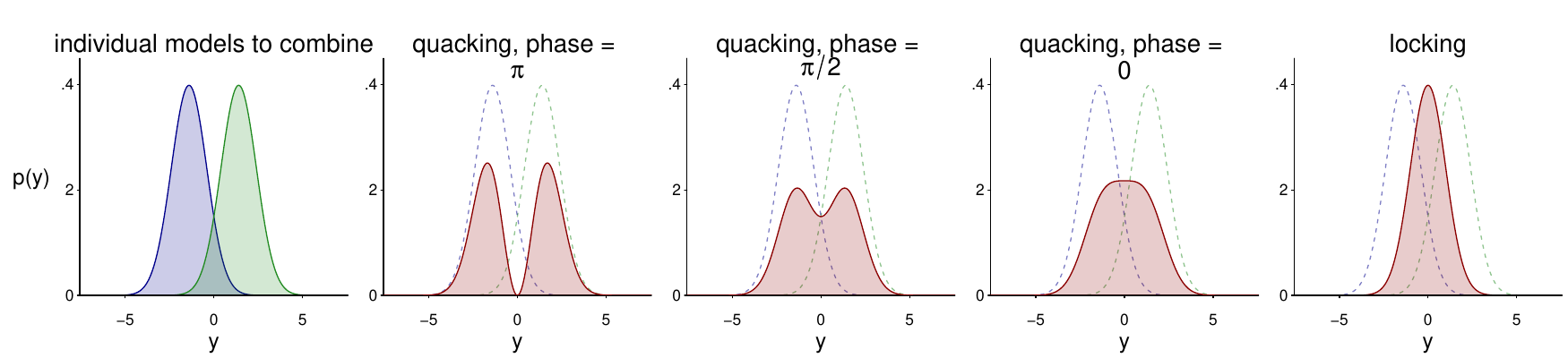}
    \caption{\small When combining two probabilistic predictions (in the left-most panel), quacking combines them via superposition and locking combines them by geometric bridge (log-linear pooling). The phase parameter $\alpha$ present in quacking dictates the degree of unimodality to enforce in the combined posterior, while this is done implicitly in locking.}
    \label{fig:demo}
\end{figure}
Alongside locking, we might consider combining Bayesian model predictions by quantum superposition (which we call \textit{quacking}), 
\begin{equation}\label{eq_super}
    h(\pi_1(\cdot), \dots, \pi_K(\cdot) \mid \boldsymbol{w},\alpha) \coloneqq \frac{ \big| \sum_k \sqrt{w_k}  \sqrt{\pi_k(\cdot)} e^{i\alpha_k}  \big|^2}{\int_{\mathcal{Y}} \big| \sum_k \sqrt{w_k}  \sqrt{\pi_k(y^)} e^{i\alpha_k}  \big|^2\,\mathrm{d} y},
\end{equation}
for $\boldsymbol{w} \in \simplex^K$ and $\alpha \in[0, 2\pi)^{K}$. In practice, we approximate this with the hybrid form
\begin{equation}
h (\pi_1(\cdot), \dots, \pi_K(\cdot) \mid \boldsymbol{w}, \boldsymbol{\beta}) \coloneqq  
\frac{\left(\sum_k \beta_k \pi_k(\cdot)  \right)^{w_0}\prod_{k}  \pi_k^{w_k}(\cdot)}{\int_{\mathcal{Y}} \left(\sum_k \beta_k \pi_k(y) \right)^{w_0} \prod_{k}  \pi_k^{w_k} (y) \,\mathrm{d}y},
\end{equation}
where now $\boldsymbol{\beta} \in \simplex^k, ~\boldsymbol{w} \in \R^{k + 1}$.
This combination regime induces different idiosyncrasies to locking, visualised in Figure~\ref{fig:demo}. The phase parameter $\alpha$ in quacking dictates the degree of unimodality enforced in the combined posterior, with $\alpha = 0$ producing the most unimodal posterior, and $\alpha = \pi$ preserving two distinct model. 
The quacking derivatives require to compute the Hyv\"arinen score also come in closed form expression (functions of weights $\boldsymbol{w}$ and phase $\alpha$) and are presented in Appendix~\ref{sec:appendix-quack-Hyva}. 

%
 %
Locking represents a step towards more exotic Bayesian model combinations, capable of alleviating some of the drawbacks of linear model mixtures. We have found that across different model regimes, score-matched locking is able to produce stable predictions, and by investigating the posterior model weights is able to identify the true model under the assumption that it exists. In the case where the true model is not included in the mixture, then achieving a low Hyv\"arinen score has some intuition as was discussed by \citet{jewson_general_2021}. Namely, since our mixture is unnormalised, the predictive point estimate can not be interpreted due to the lack of normalising constant. Instead, following \citet{ehm_local_2010} we find that local proper scoring rules (the family of which the Hyv\"arinen score is a member) can be decomposed into an accuracy and precision term \citep{jewson_general_2021}. As such, achieving a low Hyv\"arinen score can be interpreted to mean that the \textit{relative} predictions of the model and the true data-generating process are similar (high accuracy, low first derivative), and further that this accuracy is stable (high precision, low second derivative).
\subsection{Limitations}
Despite the extension to stacking this paper makes with general combination operators such as log-linear pooling through the Hyv\"arinen score, our approach has  limitations.  First, the   Hyv\"arinen score only applies to continuous outcomes.  Second, when the log score is accessible, the Hyv\"arinen score only matches the gradient and is therefore less efficient. 
Third, unlike a linear mixture, after we run log-linear stacking, it is typically non-trivial to sample from the outcome model  $ y\mid w \propto  {\prod_{k}  \pi_k^{w_k}(y)}$.   Because we are typically working with the situation with a low outcome dimension, we find simple importance sampling represent the locked posterior well in our experiments (Appendix \ref{sec:importance_sampling}).  A more efficient sampling algorithm remains an open problem and we leave it  for future investigations.

\bibliographystyle{apalike}
\bibliography{main}

\begin{thebibliography}{}

\bibitem[Carvalho et~al., 2023]{Carvalho2023}
Carvalho, L.~M., Villela, D.~A., Coelho, F.~C., and Bastos, L.~S. (2023).
\newblock Bayesian inference for the weights in logarithmic pooling.
\newblock {\em Bayesian Analysis}, 18(1):223--251.

\bibitem[Dawid and Musio, 2015]{dawid_bayesian_2015}
Dawid, A.~P. and Musio, M. (2015).
\newblock Bayesian model selection based on proper scoring rules.
\newblock {\em Bayesian Analysis}, 10(2):479 -- 499.

\bibitem[Ehm and Gneiting, 2010]{ehm_local_2010}
Ehm, W. and Gneiting, T. (2010).
\newblock Local proper scoring rules.
\newblock {\em Journal of Machine Learning Research}, 6:695--709.

\bibitem[Giummol{\`{e}} et~al., 2018]{Giummol__2018}
Giummol{\`{e}}, F., Mameli, V., Ruli, E., and Ventura, L. (2018).
\newblock Objective bayesian inference with proper scoring rules.
\newblock {\em Test}, 28(3):728--755.

\bibitem[Gneiting and Raftery, 2007]{gneiting2007strictly}
Gneiting, T. and Raftery, A.~E. (2007).
\newblock Strictly proper scoring rules, prediction, and estimation.
\newblock {\em Journal of the American Statistical Association},
  102(477):359--378.

\bibitem[Goodhart, 1984]{goodhart_problems_1984}
Goodhart, C.~A. (1984).
\newblock {\em Problems of Monetary Management: {The} {UK} Experience}.
\newblock Springer, London.

\bibitem[Hoffman and Gelman, 2014]{hoffman2014no}
Hoffman, M.~D. and Gelman, A. (2014).
\newblock The {No-U-T}urn sampler: adaptively setting path lengths in
  {H}amiltonian {M}onte {C}arlo.
\newblock {\em Journal of Machine Learning Research}, 15(1):1593--1623.

\bibitem[Huang and Gelman, 2005]{huang2005sampling}
Huang, Z. and Gelman, A. (2005).
\newblock Sampling for {B}ayesian computation with large datasets.
\newblock {\em Technical Report, Columbia University}.

\bibitem[Hyv{\"a}rinen, 2005]{hyvarinen2005estimation}
Hyv{\"a}rinen, A. (2005).
\newblock Estimation of non-normalized statistical models by score matching.
\newblock {\em Journal of Machine Learning Research}, 6(4).

\bibitem[Jacobs et~al., 1991]{jacobs1991adaptive}
Jacobs, R.~A., Jordan, M.~I., Nowlan, S.~J., and Hinton, G.~E. (1991).
\newblock Adaptive mixtures of local experts.
\newblock {\em Neural Computation}, 3(1):79--87.

\bibitem[Jewson and Rossell, 2021]{jewson_general_2021}
Jewson, J. and Rossell, D. (2021).
\newblock General bayesian loss function selection and the use of improper
  models.
\newblock {\em arXiv 2106.01214}.

\bibitem[Jordan and Jacobs, 1994]{jordan1994hierarchical}
Jordan, M.~I. and Jacobs, R.~A. (1994).
\newblock Hierarchical mixtures of experts and the {EM} algorithm.
\newblock {\em Neural Computation}, 6(2):181--214.

\bibitem[Kamary et~al., 2018]{kamary2018bayesian}
Kamary, K., Mengersen, K., Robert, C., and Rousseau, J. (2018).
\newblock Bayesian hypothesis testing as a mixture estimation model.
\newblock {\em arXiv:1412.2044}.

\bibitem[Owen, 2013]{mcbook}
Owen, A.~B. (2013).
\newblock Monte carlo theory, methods and examples.

\bibitem[Poole and Raftery, 2000]{poole2000inference}
Poole, D. and Raftery, A.~E. (2000).
\newblock Inference for deterministic simulation models: the {B}ayesian melding
  approach.
\newblock {\em Journal of the American Statistical Association},
  95(452):1244--1255.

\bibitem[Raftery et~al., 1997]{raftery1997bayesian}
Raftery, A.~E., Madigan, D., and Hoeting, J.~A. (1997).
\newblock Bayesian model averaging for linear regression models.
\newblock {\em Journal of the American Statistical Association},
  92(437):179--191.

\bibitem[Rufo et~al., 2012]{rufo2012log}
Rufo, M., Martin, J., and P{\'e}rez, C. (2012).
\newblock Log-linear pool to combine prior distributions: {A} suggestion for a
  calibration-based approach.
\newblock {\em Bayesian Analysis}, 7(2):411--438.

\bibitem[Shao et~al., 2019]{Shao2019}
Shao, S., Jacob, P.~E., Ding, J., and Tarokh, V. (2019).
\newblock Bayesian model comparison with the hyv{\"a}rinen score: Computation
  and consistency.
\newblock {\em Journal of the American Statistical Association}.

\bibitem[Vehtari et~al., 2017]{vehtari2017practical}
Vehtari, A., Gelman, A., and Gabry, J. (2017).
\newblock Practical {B}ayesian model evaluation using leave-one-out
  cross-validation and {WAIC}.
\newblock {\em Statistics and Computing}, 27(5):1413--1432.

\bibitem[Vehtari et~al., 2022]{psis}
Vehtari, A., Simpson, D., Gelman, A., Yao, Y., and Gabry, J. (2022).
\newblock Pareto smoothed importance sampling.
\newblock {\em arxiv:1507.02646}.

\bibitem[Yao et~al., 2021]{yao2021bayesian}
Yao, Y., Pir{\v{s}}, G., Vehtari, A., and Gelman, A. (2021).
\newblock Bayesian hierarchical stacking: Some models are (somewhere) useful.
\newblock {\em Bayesian Analysis}, 1(1):1--29.

\bibitem[Yao et~al., 2018]{yao2018using}
Yao, Y., Vehtari, A., Simpson, D., and Gelman, A. (2018).
\newblock Using stacking to average {B}ayesian predictive distributions.
\newblock {\em Bayesian Analysis}, 13(3):917--1007.

\bibitem[Yuksel et~al., 2012]{yuksel2012twenty}
Yuksel, S.~E., Wilson, J.~N., and Gader, P.~D. (2012).
\newblock Twenty years of mixture of experts.
\newblock {\em IEEE Transactions on Neural Networks and Learning Systems},
  23(8):1177--1193.

\end{thebibliography}
\subsection*{Acknowledgments}
Y.M. acknowledges the computational resources provided by the Aalto Science-IT project.
\newpage
\begin{appendices}
\section{Importance weighted estimate of the Hyv\"arinen score}\label{sec:appendix-is-hyva}
We consider an importance weighted estimator of the first derivative of the log predictive of a given model $k$ with respect to the data $y$,
\begin{IEEEeqnarray}{rcl}
    \nabla_y \log\pi_{k}(y) \,&=& \frac{\pi_k^\prime(y)}{\pi_k(y)} \nonumber, \\
    &=& \frac{\int_\Theta \nabla_yf_k(y\mid\theta)p_k(\theta \mid \D)\,\mathrm{d}\theta}{\int_\Theta f_k(y\mid\theta)p_k(\theta \mid \D)\,\mathrm{d}\theta},\nonumber \\
    &=& \int_{\Theta}\frac{\frac{\partial}{\partial y}f_k(y\mid\theta)}{f_k(y\mid\theta)^2}\,\cdot\,\frac{f_k(y\mid\theta)^2p_k(\theta \mid \D)}{\int_{\Theta} f_k(y\mid\theta)p_k(\theta \mid \D) \,\mathrm{d}\theta}\,\mathrm{d}\theta,\nonumber \\
    &\approx\,& \sum_{s = i}^\mathcal{S} \frac{\nabla_yf_k(y\mid\theta^{(s)})}{f_k(y\mid\theta^{(s)})^2} \,\cdot\,\frac{f_k(y\mid\theta^{(s)})^2}{ \Sigma_{s-1}^{S}f_k(y\mid\theta^{(s)})}, \nonumber \\
    &=& \frac{\sum_{s=1}^{S}\nabla_yf_k(y\mid\theta^{(s)})f_k(y\mid\theta_k^{(s)})}{\sum_{s=1}^{S}f_k(y\mid\theta_k^{(s)})^2}, \nonumber \\
    &=& \frac{\sum_{s=1}^{S}\nabla_yf_k(y\mid\theta_k^{(s)})}{\sum_{s=1}^{S}f_k(y\mid\theta_k^{(s)})} \eqqcolon g_{k}(y),\label{eq:nabla-is}
\end{IEEEeqnarray}
where once again the samples $\boldsymbol{\theta_k} = \{ \theta_k^{(1)}), \ldots, \theta_k^{(S)})\}$ are assumed to be (approximately) drawn from the posterior $p_k(\cdot \mid \D)$.
As such we find that the importance weighted expectation is taken with respect to the function
\begin{equation}
    h_1(\theta^{(s)}) = \frac{\nabla_yf_k(y\mid\theta^{(s)})}{f_k(y\mid\theta^{(s)})^2}, \nonumber
\end{equation}
and that the individual sample weights are
\begin{equation}
    \omega(\theta^{(s)}) = f_k(y\mid\theta^{(s)}) . \nonumber
\end{equation}
Repeating similar steps for the second derivative, we first note the connection to $g_{k}(y)$ in that
\begin{equation}
    \Delta_y \log \pi_{k}(y) = \frac{\pi_k^{\prime\prime}(y)\pi_k(y) - \pi_k^\prime(y)^2}{\pi_k(y)^2} = \frac{\pi_k^{\prime\prime}(y)}{\pi_k(y)} - \left(\nabla_y \log\pi_{k}(y)\right)^2. \nonumber
\end{equation}
As such, we begin by investigating,
\begin{IEEEeqnarray}{rcl}
    \frac{\pi_k^{\prime\prime}(y)}{\pi_k(y)}\,&=& \frac{\int_\Theta \Delta_yf_k(y\mid\theta)p_k(\theta \mid \D)\,\mathrm{d}\theta}{\int_\Theta f_k(y\mid\theta)p_k(\theta\mid \D)\,\mathrm{d}\theta}\nonumber, \\
    &=& \int_{\Theta}\frac{\Delta_yf(y\mid\theta)}{f_k(y\mid\theta)^2}\,\cdot\,\frac{f_k(y\mid\theta)^2p_k(\theta\mid \D)}{\int_{\Theta} f_k(y\mid\theta)p_k(\theta\mid \D) \,\mathrm{d}\theta}\,\mathrm{d}\theta,\nonumber\\
    &\approx\,& \sum_{s = i}^\mathcal{S} \frac{\Delta_yf_k(y\mid\theta^{(s)})}{f_k(y\mid\theta^{(s)})^2} \,\cdot\,\frac{f_k(y\mid\theta^{(s)})^2}{ \sum_{s-1}^{S}f_k(y\mid\theta^{(s)})},\nonumber \\
    &=& \frac{\sum_{s=1}^{S}\Delta_yf(y\mid\theta^{(s)})}{\sum_{s=1}^{S}f(y\mid\theta^{(s)})^2}, \nonumber \\
    &=& \frac{\sum_{s=1}^{S}\Delta_y f_k(y\mid\theta^{(s)})}{\sum_{s=1}^{S}f(y\mid\theta^{(s)})}.\nonumber
\end{IEEEeqnarray}
From this, we produce the importance weighted of the second derivative as
\begin{equation}
    \Delta_y \log \pi_{k}(y) \approx \frac{\sum_{s=1}^{S}\Delta_yf_k(y\mid\theta^{(s)})}{\sum_{s=1}^{S}f_k(y\mid\theta^{(s)})} - g_k(y)^2, \label{eq:delta-is}
\end{equation}
so that the importance weight for both derivative terms is the same $\omega(\theta)$ as before, but that expectation is now taken with respect to
\begin{equation}
    h_2(\theta^{(s)}) = \frac{\Delta_yf_k(y\mid\theta^{(s)})f_k(y\mid\theta^{(s)}) - \left(\nabla_yf(y\mid\theta^{(s)})\right)^2}{f_k(y\mid\theta^{(s)})^3}. \nonumber
\end{equation}

Recall that the Hyv\"arinen score at some observation $y$ for the posterior predictive distribution $\pi_k$ is defined as
\begin{equation}
    \Hyv(y; \pi_k) = 2\Delta_y \log \pi_k(y) + \left\lVert \nabla_y \log \pi_k(y)\right\rVert^2. \nonumber
\end{equation}
Combining now our importance sample approximations of the first and second derivative from Equations~\ref{eq:nabla-is} and~\ref{eq:delta-is}, we can produce an importance sampled version of the complete Hyv\"arinen score as 
\begin{IEEEeqnarray}{rl}
    \Hyv(y; \pi_k) \,&\approx 2\left(\frac{\sum_{s=1}^{S}\Delta_yf_k(y\mid\theta^{(s)})}{\sum_{s=1}^{S}f_k(y\mid\theta^{(s)})}\right) - g_k(y)^2. \nonumber
\end{IEEEeqnarray}
Note that this too can be rewritten in terms of the individual importance weights of Equation~\ref{hyva-weight}. As such, we are always able to diagnose very poor importance sampling by taking the logarithm of the model's log score and computing the Pareto shape parameter $\hat k$ of the tail of their sample weights. This diagnostic is inherent to Pareto smoothed importance sampling \citep{psis}. In the case where many of these Pareto $\hat k$ values are too high ($>0.7$, say), we can understand the variance of our importance-weighted estimator to be near infinite and as a result the central limit theorem required for estimator consistency is no longer guaranteed to hold.
\section{The Hyv\"arinen score for quacked posteriors}
\label{sec:appendix-quack-Hyva}
We begin by denoting
\begin{equation}
    q(\cdot\mid w) = \left(\sum_{k=1}^K\beta_k\pi_k(\cdot)\right)^{w_0}\prod_{k=1}^K\pi_{k}^{w_k}(\cdot),\nonumber
\end{equation}
wherein $\boldsymbol{\beta} \in \mathcal{S}^k, ~\boldsymbol{w} \in \mathbb{R}^k$.
Then,
\begin{IEEEeqnarray}{rl}
    \nabla_y \log q(\cdot\mid w) \;&= \nabla_y\log\left\{\left(\sum_{k=1}^K\beta_k\pi_k(\cdot)\right)^{w_0}\prod_{k=1}^K\pi_{k}^{w_k}(\cdot) \right\} \nonumber\\
    &= \underbrace{\nabla_y w_0\log\left(\sum_{k=1}^K\beta_k\pi_k(\cdot)\right)}_{(\dagger)} + \underbrace{\nabla_y\log\left(\prod_{k=1}^K\pi_k^{w_k}(\cdot)\right)}_{(\ddagger)}.\nonumber
\end{IEEEeqnarray}
Beginning with $\ddagger$, we have previously shown that
\begin{IEEEeqnarray}{rl}
    \nabla_y\log\left(\prod_{k=1}^K\pi_k^{w_k}(\cdot)\right)\;&= \sum_{k=1}^K w_{k} \nabla_y \log \left( \pi_{k} (\cdot)\right), \nonumber\\
 \;&\approx \sum_{k=1}^K w_{k}  \frac{\sum_{s=1}^S   \nabla_y   f_k (\cdot \mid \theta_{k}^{(s)})   }{\sum_{s=1}^{S}   f_k (\cdot \mid \theta_{k}^{(s)})  }. \nonumber
\end{IEEEeqnarray}
Moving now to $\dagger$, note that
\begin{equation}
    \nabla_y w_0\log\left(\sum_{k=1}^K\beta_k\pi_k(\cdot)\right) = w_0\frac{\sum_{k=1}^K\beta_k\nabla_y\pi_k(\cdot)}{\sum_{k=1}^K\beta_k\pi_k(\cdot)},\nonumber
\end{equation}
wherein we can substitute $\pi_k(\cdot)$ with its Monte Carlo estimate $S^{-1} \sum_{s=1}^{S}   f_k (\cdot \mid \theta_{k}^{(s)})$ to achieve
\begin{IEEEeqnarray}{rl}
    \nabla_y w_0\log\left(\sum_{k=1}^K\beta_k\pi_k(\cdot)\right) \;&= w_0\frac{\sum_{k=1}^K\beta_k\nabla_y\sum_{s=1}^{S}   f_k (\cdot \mid \theta_{k}^{(s)})}{\sum_{k=1}^K\beta_k \sum_{s=1}^{S}   f_k (\cdot \mid \theta_{k}^{(s)})},\nonumber\\
      &= w_0\frac{\sum_{k=1}^K\beta_k\sum_{s=1}^{S}  \nabla_y f_k (\cdot \mid \theta_{k}^{(s)})}{\sum_{k=1}^K\beta_k \sum_{s=1}^{S}   f_k (\cdot \mid \theta_{k}^{(s)})}.\nonumber
\end{IEEEeqnarray}
Combining these we find 
\begin{equation}
    \nabla_y \log q(\cdot\mid w) \approx w_0\frac{\sum_{k=1}^K\beta_k\sum_{s=1}^{S}  \nabla_y f_k (\cdot \mid \theta_{k}^{(s)})}{\sum_{k=1}^K\beta_k \sum_{s=1}^{S}   f_k (\cdot \mid \theta_{k}^{(s)})} + \sum_{k=1}^K w_{k}  \frac{\sum_{s=1}^S   \nabla_y   f_k (\cdot \mid \theta_{k}^{(s)})   }{\sum_{s=1}^{S}   f_k (\cdot \mid \theta_{k}^{(s)})  }. \label{eq:quacking-nabla}
\end{equation}

We can then move to the second-order derivative needed for the Hyv\"arinen score:
\begin{IEEEeqnarray}{rl}
    \Delta_y \log q(\cdot\mid w) \;&= \Delta_y\log\left\{\left(\sum_{k=1}^K\beta_k\pi_k(\cdot)\right)^{w_0}\prod_{k=1}^K\pi_{k}^{w_k}(\cdot) \right\} \nonumber\\
    &= \underbrace{\Delta_y w_0\log\left(\sum_{k=1}^K\beta_k\pi_k(\cdot)\right)}_{(\star)} + \underbrace{\Delta_y\log\left(\prod_{k=1}^K\pi_k^{w_k}(\cdot)\right)}_{(\star\star)}.\nonumber
\end{IEEEeqnarray}
Here, we have from before that
\begin{IEEEeqnarray}{rl}
    \Delta_y\log\left(\prod_{k=1}^K\pi_k^{w_k}(\cdot)\right)\;&= \sum_{k=1}^K w_k\Delta_y\log (\pi_{k}(\cdot)) \nonumber\\
    &= \sum_{k=1}^Kw_k\frac{\Delta_y\pi_{k}(\cdot)\pi_{k}(\cdot) - \nabla_y\pi_{k}(\cdot)^2}{\pi_{k}(\cdot)^2} \nonumber\\
    &\approx \sum_{k=1}^K w_k\left(\frac{\sum_{s=1}^S\Delta_yf_k(\cdot\mid\theta_k^{(s)})}{\sum_{s=1}^S f_k(\cdot\mid\theta_k^{(s)})} - \left( \frac{\sum_{s=1}^S\nabla_yf_k(\cdot\mid\theta_k^{(s)})}{\sum_{s=1}^S f_k(\cdot\mid\theta_k^{(s)})}\right)^2\right), \label{eq:quacking-delta-i}
\end{IEEEeqnarray}
and similarly to before for that
\begin{IEEEeqnarray}{rl}
    \Delta_y w_0\log\left(\sum_{k=1}^K\beta_k\pi_k(\cdot)\right) \;&= w_0\frac{\sum_{k=1}^K\beta_k\Delta_y\pi_k(\cdot)}{\sum_{k=1}^K\beta_k\pi_k(\cdot)} \nonumber\\
    &\approx w_0\frac{\sum_{k=1}^K\beta_k\sum_{s=1}^{S}  \Delta_y f_k (\cdot \mid \theta_{k}^{(s)})}{\sum_{k=1}^K\beta_k \sum_{s=1}^{S}   f_k (\cdot \mid \theta_{k}^{(s)})}.\label{eq:quacking-delta-ii}
\end{IEEEeqnarray}
We combine Equations~\ref{eq:quacking-nabla},~\ref{eq:quacking-delta-i}, and~\ref{eq:quacking-delta-ii} to achieve the Hyv\"arinen score for quacked posteriors.
\section{Sampling from the locked predictive}\label{sec:importance_sampling}
Unlike a linear mixture, it is non-trivial to sample from a log-linear density $\prod \pi_k^{w_k}(\cdot)$
from existing sample draws. One quick approximation is to run importance sampling using the equally weighted proposal $1/K\sum_{k=1}^K \pi_k(\cdot)$.
 
Provided the modes of the constituent modes are relatively concentrated (that is the the models we are stacking have modes not too distant from one another), importance sampling should be able to represent the locked posterior well. Indeed, in general the mode of the locked posterior is bounded by the extremal modes of the component models when they are all unimodal.
 
\begin{proposition}[\textbf{The mode of the log-pooled density}]
\label{prop:logpool_mode}
Suppose each $\pi_k$ is unimodal and let $\boldsymbol{m} = \{m_1, \ldots, m_K \}$ be the modes of each density.
Further, let $m_a$ and $m_b$, $a, b \in [1, \ldots, K]$ be the smallest and largest such modes respectively.
Then the log-pooled density, $p(\cdot\mid\boldsymbol{w}) = c(\boldsymbol{w}) \prod_{k=1}^K \pi_k(\cdot)^{w_k}$ is unimodal with mode $m^\star \in [m_a, m_b]$.
\end{proposition}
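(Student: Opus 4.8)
The plan is to pass to logarithms, where the geometric pool becomes an affine combination and the unknown constant $c(\boldsymbol w)$ plays no role in maximising the density. Set $G(y) \coloneqq \sum_{k=1}^{K} w_k \log \pi_k(y)$ on $Y \coloneqq \bigcap_k \{y : \pi_k(y) > 0\}$, which is the support of $p(\cdot \mid \boldsymbol w)$; since $\log$ is strictly increasing, the maximisers of $p$ on $Y$ coincide with the maximisers of $G$ on $Y$. First I would record the monotone form of the hypothesis: $\pi_k$ unimodal with mode $m_k$ means $\pi_k$ is non-decreasing on $(-\infty, m_k] \cap Y$ and non-increasing on $[m_k, \infty) \cap Y$, and the same monotonicity is inherited by $\log \pi_k$, hence by $w_k \log \pi_k$ since $w_k \ge 0$. (If one prefers derivatives, this is just $(\log \pi_k)'(y) = \pi_k'(y)/\pi_k(y)$ having the sign of $m_k - y$; the monotonicity phrasing sidesteps any smoothness assumption.)

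The crux is a tail-monotonicity argument. For $y \le m_a = \min_k m_k$ we have $y \le m_k$ for every $k$, so every term $w_k \log \pi_k$ is non-decreasing at $y$; summing, $G$ is non-decreasing on $(-\infty, m_a] \cap Y$. By the mirror argument $G$ is non-increasing on $[m_b, \infty) \cap Y$ with $m_b = \max_k m_k$. Hence no point of $Y$ strictly outside $[m_a, m_b]$ can have a larger value of $G$ than the nearer endpoint of $[m_a, m_b]$, so $\sup_{y \in Y} G(y) = \sup_{y \in [m_a, m_b]} G(y)$. Continuity of each $\pi_k$ (standard for the posterior predictive densities used here) makes $G$ continuous, and $[m_a, m_b]$ is compact, so the supremum is attained at some $m^\star \in [m_a, m_b]$; that is the claimed bound on the mode.

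For the unimodality of $p$ itself I would invoke the log-concave route already flagged in Section~\ref{sec:introduction}: if in addition each $\pi_k$ is log-concave, then $G$ is a non-negative linear combination of concave functions, hence concave, so $p = c(\boldsymbol w) e^{G}$ is log-concave and therefore unimodal, with unique mode exactly the $m^\star \in [m_a, m_b]$ located above.

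I expect the unimodality claim to be the genuine obstacle, because it does not follow from bare unimodality of the components: on the middle interval $(m_a, m_b)$ the decreasing pieces of some $\log \pi_k$ and the increasing pieces of others compete, so $G$ can in principle oscillate there and create several interior local maxima. The robust, hypothesis-minimal content of the proposition is therefore the localisation ``every global maximiser lies in $[m_a, m_b]$'', while the word ``unimodal'' really needs the log-concavity strengthening (or a comparable single-crossing condition on the score differences $(\log\pi_k)'$). I would state the proposition in these two layers, make explicit which hypothesis delivers which conclusion, and absorb the boundary cases (densities with bounded or disjoint supports, non-differentiable modes, integrability of $\prod_k \pi_k^{w_k}$) into the monotonicity-based phrasing as routine bookkeeping.
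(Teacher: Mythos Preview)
Your localisation argument is essentially the paper's: both pass to $G(y)=\sum_k w_k\log\pi_k(y)$ and use that each $\log\pi_k$ is non-decreasing up to $m_k$ and non-increasing thereafter, so $G$ is monotone on each tail outside $[m_a,m_b]$. You phrase this directly (monotonicity forces $\sup G$ onto $[m_a,m_b]$); the paper phrases it by contradiction (assume $m^\star<m_a$, compare $G(m^\star)$ with $G(m_a)$ term by term). Same idea, cosmetic difference.

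The substantive divergence is on unimodality. The paper does not argue it from the stated hypotheses at all; it simply cites Theorem~2.2 of \citet{Carvalho2023} and moves on. You instead flag---correctly---that mere unimodality of the $\pi_k$ does \emph{not} force unimodality of $\prod_k\pi_k^{w_k}$: on $(m_a,m_b)$ some $\log\pi_k$ are increasing and others decreasing, so $G$ can oscillate, and heavy-tailed examples (e.g.\ two well-separated $t$-type densities) produce a bimodal product. Your fix via log-concavity is clean and self-contained: a non-negative combination of concave $\log\pi_k$ is concave, hence $p$ is log-concave and unimodal with the unique maximiser already located in $[m_a,m_b]$. What this buys over the paper's treatment is transparency about which hypothesis is doing the work; what the paper's citation buys is brevity and, presumably, whatever weaker sufficient condition the cited theorem actually uses. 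If you want to match the paper, replace your log-concavity paragraph with the citation; if you want to improve on it, keep your two-layer statement (localisation under unimodality alone, full unimodality under log-concavity) and note that the proposition as written over-claims.
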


\begin{proof}
The unimodality of the locked posterior follows from Theorem 2.2 by \cite{Carvalho2023} under the assumption that all consistuent models are unimodal.
For boundedness of the mode in location, let us first represent the densities by $\pi_k(\cdot) = \exp(\mu_k(\cdot))$ and get $p(\cdot\mid \boldsymbol{w}) = \exp\left(\sum_{k=1}^K w_k \mu_k(\cdot) \right)$ for $\boldsymbol{w}\in\simplex^K$.
Notice that by assumption each $\mu_k$ is monotonically non-decreasing on $(-\infty, m_k)$ and monotonically non-increasing on $[m_k, \infty)$.
The proof will proceed by contradiction.
Suppose $m^\star < m_a$, which implies
\begin{IEEEeqnarray*}{lrl}
    &p(m^\star\mid \boldsymbol{w}) &\;> p(m_a\mid \boldsymbol{w}),\\
    \implies &\;\exp\left(\sum_{k=1}^K w_k \mu_k(m^\star) \right) &\;> \exp\left(\sum_{k=1}^K w_k \mu_k(m_a) \right),\\
    \implies &\;\sum_{k=1}^K w_k \mu_k(m^\star)  &\;> \sum_{k=1}^K w_k \mu_k(m_a).
\end{IEEEeqnarray*}
Re-arranging gives
\begin{IEEEeqnarray}{lrl}
\nonumber
 &\sum_{j \neq a}^K w_j \mu_j(m^\star) + w_a\mu_a(m^\star)  &\;> \sum_{j \neq a}^K w_j \mu_j(m_a) + w_a\mu_a(m_a),\\
 \label{eq:contrad}
 \implies &\; \sum_{j \neq a}^K w_j \left[ \mu_j(m^\star)-\mu_j(m_a)\right] &\;> w_a \left[\mu_a(m_a)-\mu_a(m^\star)\right] > 0.
 \end{IEEEeqnarray}
 However, since $\mu_j(m^\star) \leq \mu_j(m_a)$ for all $j \neq a$ by assumption, their (weighted) sum is negative, leading to a contradiction in Equation~\ref{eq:contrad}.
 The argument for $m^\star > m_b$ is analogous and completes the proof.
\end{proof}
\end{appendices}
\end{document}